

\documentclass[]{interact}

\pdfoutput=1 

\usepackage{epstopdf}
\usepackage[caption=false]{subfig}


\usepackage[natbibapa,nodoi]{apacite} 
\setlength\bibhang{12pt}

\theoremstyle{plain}
\newtheorem{theorem}{Theorem}[section]
\newtheorem{lemma}[theorem]{Lemma}
\newtheorem{corollary}[theorem]{Corollary}

\theoremstyle{definition}

\theoremstyle{remark}

\usepackage{hyperref}
\usepackage{tabularx}
\usepackage{xcolor}
\usepackage[latin1]{inputenc}
\usepackage{etoolbox} 
\usepackage{amsmath}
\DeclareMathOperator{\sgn}{sgn}

\begin{document}


\newtoggle{anonymous}
\togglefalse{anonymous}

\title{Optimal coding and the origins of Zipfian laws}

\iftoggle{anonymous}{}
{ 
  \author{
  \name{Ramon Ferrer-i-Cancho\textsuperscript{a}\thanks{CONTACT Ramon Ferrer-i-Cancho. Email: rferrericancho@cs.upc.edu}, Christian Bentz\textsuperscript{b,c} and Caio Seguin\textsuperscript{d}}
  \affil{
         \textsuperscript{a}Complexity \& Quantitative Linguistics Lab, LARCA Research Group, Departament de Ci\`encies de la Computaci\'o, Universitat Polit\`ecnica de Catalunya, Campus Nord, Edifici Omega, Jordi Girona Salgado 1-3. 08034 Barcelona, Catalonia (Spain). ORCiD: 0000-0002-7820-923X. rferrericancho@cs.upc.edu. \\
         \textsuperscript{b}URPP Language and Space, University of Z\"{u}rich, Freiestrasse 16, CH-8032 Z\"{u}rich, Switzerland. ORCiD: 0000-0001-6570-9326. chris@christianbentz.de \\
         \textsuperscript{c}DFG Center for Advanced Studies ``Words, Bones, Genes, Tools'', University of T\"{u}bingen, R\"{u}melinstra{\ss}e 23, D-72070 T\"{u}bingen, Germany. ORCiD: 0000-0001-6570-9326. \\
         \textsuperscript{d}Melbourne Neuropsychiatry Centre,  The University of Melbourne and Melbourne Health, Melbourne, VIC 3010, Australia. ORCiD: 0000-0001-9384-6336. caio.seguin@unimelb.edu.au.}
  }
}

\maketitle

\begin{abstract}
The problem of compression in standard information theory consists of assigning codes as short as possible to numbers.
Here we consider the problem of optimal coding -- under an arbitrary coding scheme -- and show that it predicts Zipf's law of abbreviation, namely a tendency in natural languages for more frequent words to be shorter. We apply this result to investigate optimal coding also under so-called non-singular coding, a scheme where unique segmentation is not warranted but codes stand for a distinct number.
Optimal non-singular coding predicts that the length of a word should grow approximately as the logarithm of its frequency rank, which is again consistent with Zipf's law of abbreviation. Optimal non-singular coding in combination with the maximum entropy principle also predicts Zipf's rank-frequency distribution. 
Furthermore, our findings on optimal non-singular coding challenge common beliefs about random typing. It turns out that random typing is in fact an optimal coding process, in stark contrast with the common assumption that it is detached from cost cutting considerations. 
Finally, we discuss the implications of optimal coding for the construction of a compact theory of Zipfian laws more generally as well as other linguistic laws.
\end{abstract}


\noindent {\small {\it Keywords\/}: Zipf's law for word frequencies, Zipf's law of abbreviation, optimal coding, maximum entropy principle}\\



\section{Introduction}

Zipf's law of abbreviation states that more frequent words tend to be shorter \citep{Zipf1949a}. Its widespread presence in human languages \citep{Bentz2016a}, and the growing evidence in other species \citep{Ficken1978a, Hailman1985, Ferrer2009g,Ferrer2012a,Ferrer2012d,Luo2013a,Heesen2019a,Demartsev2019a,Favaro2020a,Huang2020a}, calls for a theoretical explanation. 
The law of abbreviation has been interpreted as a manifestation of compression \citep{Ferrer2012d}, assigning strings as short as possible to represent information, a fundamental problem in information theory, and coding theory in particular \citep{Cover2006a}. Here we aim to investigate compression as a fundamental principle for the construction of a compact theory of linguistic patterns in natural communication systems \citep{Ferrer2015b}. We explore the relationship between compression and Zipf's law of abbreviation, as well as other regularities such as Zipf's law for word frequencies. The latter states that $p_i$, the probability of $i$-th most frequent word, follows \citep{Zipf1949a},   
\begin{equation}
p_i \approx i^{-\alpha},
\label{Zipfs_law_equation}
\end{equation}
where $\alpha$ is the exponent (a parameter of the distribution) that is assumed to be about $1$ \citep{Ferrer2004a}. \citet{Zipf1949a} referred to Equation \ref{Zipfs_law_equation} as the {\em rank-frequency distribution}.

In standard information theory, codes are strings of symbols 
from a certain alphabet of size $N$ which are used to represent discrete values from a set of $V$ elements, e.g., natural numbers \citep{Borda2011a}. Suppose that the codes have minimum length $l_{min}$ (with $l_{min}=1$ by default). 
For example, if the alphabet is formed by letters $a$ and $b$, the possible codes are 
\begin{equation}
a, b, aa, ab, ba, bb, aaa, aab, aba, abb, baa,...
\label{shortest_words_equation}
\end{equation} 
As a set of discrete values one may have natural numbers,
\begin{equation*}
1,2,3,4,5,6,7,8,9,10,11,...
\end{equation*} 
For simplicity, we assume that we wish to code for natural numbers from $1$ to $V$. 
These numbers should be interpreted as what one wishes to code for or as indices or identifiers of what one actually wishes to code for. Therefore, if one wished to code for $V$ different objects that are not numbers from 1 to $V$, one should label each object with a distinct number from 1 to $V$.

In that framework, the problem of compression consists of assigning codes to natural numbers from $1$ to $V$ in a way to minimize the mean length of the codes, defined as \citep{Cover2006a}
\begin{equation}
L  = \sum_{i=1}^V p_i l_i,
\label{mean_code_length_equation}
\end{equation}
where  
$p_i$ is the probability of the $i$-th number and $l_i$ is the length of its code in symbols. 
The standard problem of compression consists of minimizing $L$ with the $p_i$'s as a given, and under some coding scheme \citep{Cover2006a}. Roughly speaking, a coding scheme is a constraint on how to translate a number into a code in order to warrant successful decoding, namely retrieving the original number from the code from the receiver's perspective. In the examples of coding that will follow, we assume that one wishes to 
code numbers from $1$ to $6$ on strings from an alphabet of two letters $a$ and $b$. Table \ref{unconstrained_coding_example_table} shows an example of unconstrained coding (no scheme is used). 
The coding in that example is optimal because all strings have minimum length but it is not very useful because each string has three numbers as possible interpretations.

\begin{table}
\caption{\label{unconstrained_coding_example_table} An example of optimal unconstrained coding of numbers from $1$ to $6$ on strings from an alphabet of two letters $a$ and $b$.}
\centering
\begin{tabular}{rr}
  Number & Code \\ \hline
  $1$ & $a$ \\
  $2$ & $a$ \\
  $3$ & $a$ \\
  $4$ & $b$ \\
  $5$ & $b$ \\
  $6$ & $b$
\end{tabular}
\end{table}

Table \ref{non-singular_coding_example_table} shows an example of so-called \textit{non-singular} coding, meaning that a unique code is assigned to each number. Thus, every code has only one possible interpretation. If we assigned the string $aa$ to more than one number, the coding would not be non-singular. The example in Table \ref{unconstrained_coding_example_table} is not non-singular either.  
\begin{table}
\caption{\label{non-singular_coding_example_table} An example of non-singular coding of numbers from $1$ to $6$ on strings from an alphabet of two letters $a$ and $b$.}
\centering
\begin{tabular}{rr}
  Number & Code \\ \hline
  $1$ & $aa$ \\
  $2$ & $ab$ \\
  $3$ & $a$ \\
  $4$ & $b$ \\
  $5$ & $ba$ \\
  $6$ & $bb$
\end{tabular}
\end{table}
In the standard problem of compression, the alphabet is also a given. Therefore, $L$ is minimized with $N$ constant.

The problem of compression can be related to human languages in two ways: either we think of the numbers as representing word types (distinct words), or as representing meaning types (distinct meanings).
In the former case, codes stand for distinct word types, in the latter case, they stand for distinct meanings.  
If numbers represent word types, then a typical application is to solve the problem of optimal \textit{recoding}, namely reducing the length of words as much as possible without losing their distinctiveness.  
If we consider numbers to represent meaning types, then human languages  do not perfectly fit the non-singular coding scheme due to polysemy (the same word types can have more than one meaning). 
However, non-singularity is convenient for language {\em a priori} because it reduces the cost of communication from the listeners perspective \citep{Zipf1949a,Piantadosi2012a} as well as the cost of vocabulary learning in children \citep{Casas2019a}.
Optimization pressures in both ways -- shortening of codes, on the one hand, and reducing polysemy (eventually leading to non-singular coding), on the other -- are likely to coexist in real languages, as suggested by experiments \citep{Kanwal2017a}. See \citet[Section 5.2]{Ferrer2015b} for a possible formalization based on a generalization of standard coding theory.

The information theory concepts introduced above have a direct correspondence with popular terms used in research on language optimization. The non-singular scheme implies least effort for the listener, in G. K. Zipf's terms \citep{Zipf1949a}.
Zipf's law of abbreviation was explained as the result of combining two pressures \citep{Kanwal2017a}:
{\em accuracy}, i.e. avoiding ambiguity, and {\em efficiency}, i.e. using word forms as short as possible. Communicating with maximum accuracy (no ambiguity) is equivalent to the non-singular scheme. Compression (the minimization of $L$) is equivalent to efficiency. 
 
A further coding scheme, which is central to information theory, is \textit{uniquely decodable} coding, namely, non-singular coding with unique segmentation. That is, when codes are concatenated without a separator, e.g., space, there should be only one way of breaking the sequence into codes. Uniquely decodable codes are hence a subset of non-singular codes (Figure \ref{hiearchy_of_coding_schemes_figure}).
 
\begin{figure}
\begin{center}
\includegraphics[scale = 0.5]{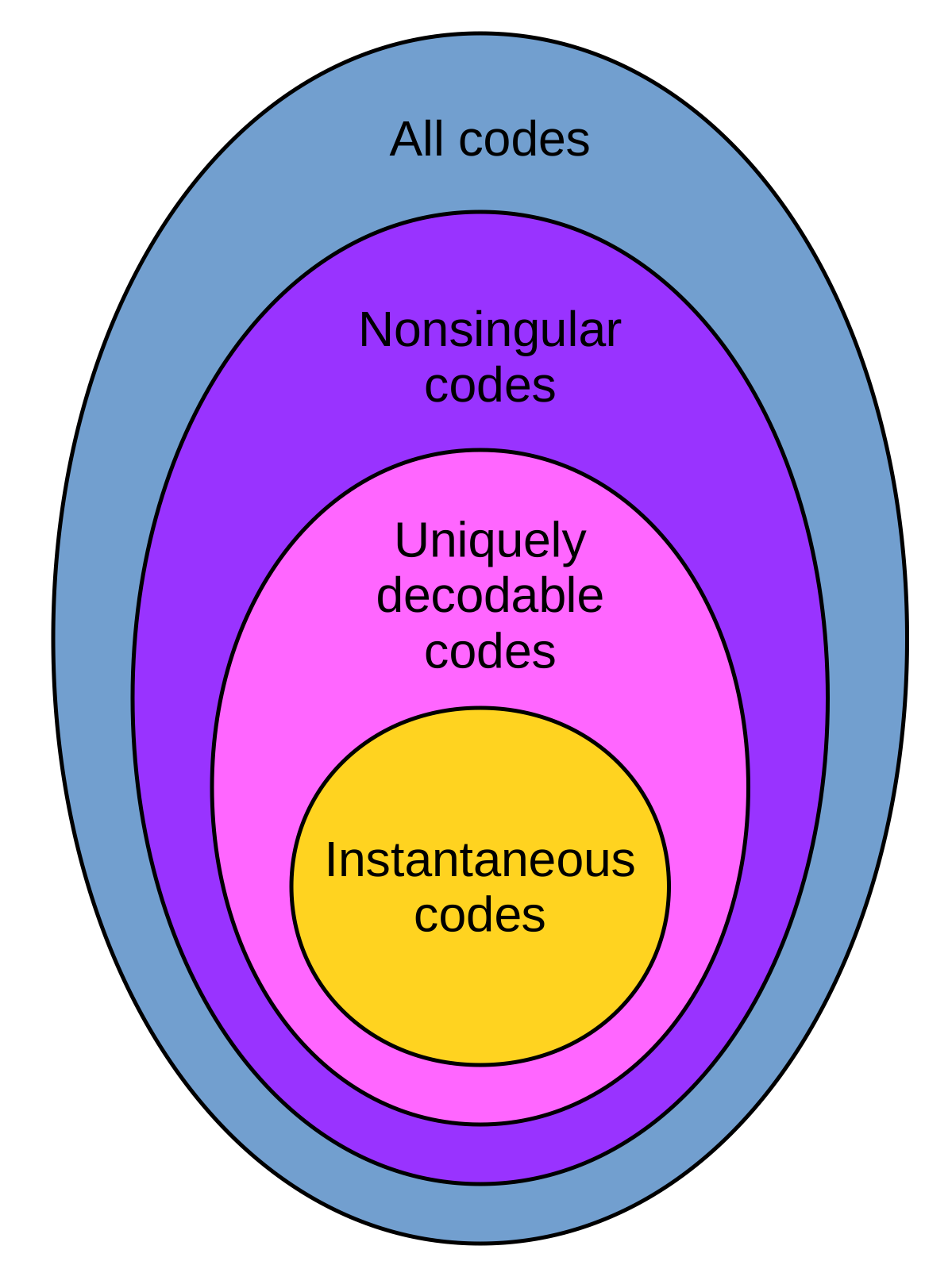}
\end{center}
\caption{\label{hiearchy_of_coding_schemes_figure} Classes of codes. Adapted from \citet[p. 106]{Cover2006a}. Instantaneous codes, that are not described in the main text, are codes such that there is no string in the coding table that matches the beginning of another string (totally or partially). An example of instantaneous code would be the binary representation of numbers from 1 to 6 in the examples of the article. }
\end{figure}

The coding in Table \ref{non-singular_coding_example_table} is not uniquely decodable because the string $baba$ can be interpreted as $4343$, $55$, etc. In contrast, Table \ref{uniquely_decodable_encoding_example_table} shows a coding that is uniquely decodable. The string $baba$ can here only be interpreted as $12$.
  
\begin{table}
  \caption{\label{uniquely_decodable_encoding_example_table} An example of uniquely decodable coding of numbers from $1$ to $6$ on strings from an alphabet of two letters $a$ and $b$ using Elias gamma encoding (a coding procedure where the code itself tells its length, turning segmentation straightforward \citep[p. 199]{Elias1975a}. }
\centering 
\begin{tabular}{rr}
  Number & Code \\ \hline
  $1$ & $b$ \\
  $2$ & $aba$ \\
  $3$ & $abb$ \\
  $4$ & $aabaa$ \\
  $5$ & $aabab$ \\
  $6$ & $aabba$
\end{tabular}
\end{table}

It is easy to see that written English, when written without spaces, is often not uniquely decodable. {\em together} can be read as both a single word and also {\em to get her} \citep{McMillan1956a}. {\em Godisnowhere} illustrates the same problem: it can be read either as {\em God is nowhere} or as {\em God is now here}. Similar examples can be found in spoken English or other languages. However, unique decodability would be generally convenient for segmenting speech easily \citep{Romberg2010a}. Again, unique decodability is a listener's requirement, who has to be able to retrieve the codes and the corresponding numbers when the codes are produced in a row (lacking spaces or silences in between them).

\begin{table}
\caption{\label{optimal_non-singular_coding_example_table} Optimal non-singular coding of numbers from 1 to 6 on strings consisting of symbols $a$ and $b$. Notice that codes are assigned to frequency ranks. }
\centering
\begin{tabular}{rr}
  Rank & Code \\ \hline
  $1$ & $a$ \\
  $2$ & $b$ \\
  $3$ & $aa$ \\
  $4$ & $ab$ \\
  $5$ & $ba$ \\
  $6$ & $bb$
\end{tabular}
\end{table}

Now, suppose that we assign a frequency rank to each number (the most frequent number has rank 1, the 2nd most frequent number has rank 2, and so on). In his pioneering research, Mandelbrot considered the problem of compression implicitly, by assuming that word types are the numbers to code, and wrote that \citep[p. 365]{Mandelbrot1966} {given any prescribed multiset of word probabilities, the average number of letters per word} ($L$ in our notation above) {\em is minimized if the list of words ranked by decreasing probability, coincides with the list of the $V$ shortest letter sequences, ranked by increasing number of letters} 
(as in Table \ref{optimal_non-singular_coding_example_table} for the case of only two letters). In the language of information theory, he addressed the problem of compression under the scheme of optimal non-singular coding. To our knowledge, a formal proof of the optimality of his coding procedure is still lacking. 
In fact, information theoretic research has generally neglected the problem of optimal non-singular coding since then, and instead focused on uniquely decodable encoding. The reasons for this are three-fold: 
\begin{itemize}
\item
The primary target of standard information theory are artificial devices (not human brains or natural communication systems).
\item
The hard segmentation problem arising when non-singular codes are concatenated without separators (word delimiters).
\item
The waste of time/space when separators are added to facilitate segmentation over these codes \citep[p. 105]{Cover2006a}.  
\end{itemize}
These considerations may have prevented information theory from providing simple explanations to linguistic laws. 

The remainder of the article is organized as follows. Section \ref{optimal_coding_section} presents a generalization of the problem of compression that predicts the law of abbreviation under an arbitrary coding scheme. This type of compression problem is used to prove that non-singular coding consists of assigning a string as short as possible (preserving non-singularity) to each number following frequency ranks in ascending order -- as expected by \citet{Mandelbrot1966}. As an example, the coding in Table \ref{optimal_non-singular_coding_example_table} satisfies this design, while that of Table \ref{uniquely_decodable_encoding_example_table} does not (in the latter, all codes are unnecessarily long from non-singular coding perspective except for rank 1). 
In case of optimal non-singular coding, Section \ref{optimal_coding_section} shows that $l_i$ is an increasing logarithmic function of $i$, the frequency rank when $N > 1$, and a linear function of $i$ when $N=1$, giving an exact formula in both cases. This prediction is a particular case of Zipf's law of abbreviation. 

The logarithmic relation between length and frequency rank that results from optimal non-singular coding is crucial: it provides a justification for the logarithmic constraint that is needed by the most parsimonious derivation of Zipf's rank-frequency distribution based on the maximum entropy principle \citep{Visser2013a}. For this reason, Section \ref{maximum_entropy_principle_section} revisits Mandelbrot's derivation of Zipf's distribution combining optimal non-singular coding, and the maximum entropy (maxent) principle \citep{Mandelbrot1966}. This adds missing perspectives to his original analysis, and illustrates the predictive capacity of optimal non-singular coding with regards to linguistic laws. Although the distribution of word frequencies is power-law-like, an exponential distribution is found for other linguistic units, e.g. part-of-speech tags \citep[p. 116-122]{Tuzzi2010a},
colors \citep{Ramscar2019a}, kinship terms \citep{Ramscar2019a} and verbal alternation classes \citep{Ramscar2019a}. 
Beyond texts, exponential distributions are found in first names in the census or social security records \citep{Ramscar2019a}. Non-singular coding and maxent can shed light on the emergence of these two types of distributions.  
In particular, Section \ref{maximum_entropy_principle_section} shows how the combination of the maximum entropy principle and optimal non-singular coding predicts two different distributions of ranks depending on the value of $N$. When $N > 1$, it predicts Equation \ref{Zipfs_law_equation}. When $N = 1$, it predicts a geometric distribution of ranks, namely,
\begin{equation}
p_i = q (1-q)^{i-1},
\label{geometric_distribution_equation}
\end{equation}
where $q$ is a parameter between 0 and 1. In addition, such a geometric distribution may arise from suboptimal coding when $N > 1$.

Section \ref{random_typing_section} then challenges the long-standing believe that random typing constitutes evidence that Zipfian laws (Zipf's rank-frequency law and Zipf's law of abbreviation) can be derived without any optimization or cost-cutting consideration \citep{Miller1957,Li1998,Kanwal2017a,Chaabouni2019a}: random typing emerges as an optimal non-singular coding system in disguise. In addition, we investigate various properties of random typing, applying results on optimal coding from Section \ref{optimal_coding_section}, and providing a simple analytical expression for the relationship between the probability of a word and its rank -- a result that \citet{Mandelbrot1966} believed to be impossible to obtain.

Section \ref{discussion_section} discusses the implications for empirical research on linguistic laws and how compression, optimal coding and maximum entropy can contribute to the construction of a general but compact theory of linguistic laws.

\section{Optimal coding}

\label{optimal_coding_section}

Here we investigate a generalization of the problem of compression, where $L$ (Equation \ref{mean_code_length_equation}) is generalized as mean energetic cost, i.e.
\begin{equation}
\Lambda = \sum_{i=1}^V p_i \lambda_i,
\label{energy_supplementary_equation}
\end{equation}
and $p_i$ and $\lambda_i$ are, respectively, the probability and the energetic cost of the $i$-th type. 
Without any loss of generality, suppose that the types to be coded are sorted nonincreasingly, i.e. 
\begin{equation}
p_1 \geq p_2 \geq ... \geq p_V, 
\label{ordering_of_probabilities_equation} 
\end{equation}
Roughly speaking, a nonincreasing order is the outcome of sorting in decreasing order. We refer to it as nonincreasing instead of decreasing because, strictly, a decreasing order can only be obtained if all the values are distinct.

The generalization is two-fold. First, $\lambda_i = g(l_i)$, where $g$ is a strictly monotonically increasing function of $l_i$. 
Second, $l_i$ is generalized as a magnitude, namely, a positive real number.
When $g(l_i) = l_i$ and $l_i$ is the length in symbols of the alphabet, $\Lambda$ becomes $L$ (Equation \ref{mean_code_length_equation}), the mean code length of standard information theory \citep{Cover2006a}. The generalization function $g$ follows from other research on the optimization of communication where the energetic cost of the distance between syntactically related words in a linear arrangement is assumed to be a strictly monotonically increasing function of that distance \citep{Ferrer2013e}. The goal of $g$ is abstracting away from the translation of some magnitude (word length or distance between words) into a real energetic cost. Here we investigate the minimization of $\Lambda$ when the $p_i$'s are constant (given) as in the standard problem of compression, 
where the magnitudes are lengths of strings following a certain scheme \citep{Cover2006a}.

\subsection{Unconstrained optimal coding}

\label{unconstrained_optimal_coding_section}

The solution to the minimization of $\Lambda$ when no further constraint is imposed is that all types have minimum magnitude, i.e.
\begin{equation}
l_i = l_{min} \mbox{~for~} i = 1, 2,..., V.
\label{trivial_minimization_equation}
\end{equation}
Then $\Lambda$ is minimized absolutely when $l_{min}=0$, the smallest possible magnitude.

Now suppose that $l_i$ is a length as in standard information theory. The condition in Equation \ref{trivial_minimization_equation} implies that all types are assigned the empty string. Then the coding fails to be non-singular (for $V > 1$).
If empty strings are not allowed then $l_{min} = 1$. In that case, optimal coding will produce codes that are not non-singular if $N < V$ (as in Table \ref{unconstrained_coding_example_table}). One may get codes that are non-singular by increasing $N$. However, recall that $N$ is constant in the standard problem of compression.


First, we will investigate the problem of compression (minimization of $\Lambda$) when the lengths are generalized to magnitudes (positive real numbers) that belong to a given multiset. Second, we will apply the results to the problem of compression in the non-singular scheme (the multiset contains the lengths of all distinct strings).  

\subsection{Optimal coding with given magnitudes}

\label{optimal_coding_with_given_magnitudes_section}

Suppose that we wish to minimize $\Lambda$ where the $l_i$'s are taken from a multiset ${\cal L}$ of real positive values with $|{\cal L}| \geq V$. For instance, the values could be the length in symbols of the alphabet or the duration of the type.
An assignment of elements of ${\cal L}$ to the $l_i$'s consists of sorting the elements of ${\cal L}$ forming a sequence and 
assigning to each $l_i$ the $i$-th element of the sequence. For an assignment, only the $V$ first elements of the sequence matter. After an assignment, the $l_i$'s define a subset of ${\cal L}$, i.e. 
\begin{equation*}
\{l_1, ..., l_i,...,l_V\} \subseteq {\cal L}.
\end{equation*}
Therefore, ${\cal L}$ is a given in addition to the $p_i$'s. ${\cal L}$ allows one to capture arbitrary constraints on word length, beyond the traditional coding schemes (e.g., non-singular coding or uniquely decodable encoding). Perceptibility and distinguishability factors may prevent the use of very short strings, even under a uniquely decodable scheme. Phonotactics (a branch of phonology) shows that not all possible combinations of phonemes are present in a language. Certain phonemes or combinations are harder (if not impossible) to articulate or perceive. See \citet[chapters 3 and 4]{Akmajian1995a} for an overview of these concepts and constraints from linguistics. 

This problem of compression is more general than the compression problem in standard information theory because:
\begin{itemize}
\item
$l_i$ is generalized as a magnitude, namely a positive real number. The strings, even when the magnitude is a length, are irrelevant. 
\item
In case the magnitudes are string lengths, the non-singular coding scheme is obtained defining ${\cal L}$ as the lengths of all the different strings that can be formed. Similarly, in case of uniquely decodable coding, the string lengths have to allow one to find strings that produce them while preserving the constraints of the scheme.   
\end{itemize}
These two generalization allow us to shed light on the origins of Zipf's law of abbreviation in human languages, where words do not match perfectly the constraints of traditional schemes, as well as in other species, where the coding scheme is unknown and the magnitude is measured as a time duration, namely a positive real value (e.g., \citet{Semple2010a,Heesen2019a}). Moreover, it is conceivable that certain natural communication systems do not build signs by combining elementary units (such as phonemes or syllables as in human languages) -- as assumed by standard information theory -- but rather hollistically. Such cases could be implemented as strings of length 1 and their magnitude could be a real number indicating their expected duration. 

When $|{\cal L}| = V$, there are as many different assignments as different sequences that can be produced from ${\cal L}$.
When $|{\cal L}| \geq V$, the solution to the problem of compression consists of finding $\Lambda_{min}$, the minimum value of $\Lambda$, and the assignments that achieve the minimum, over all the 
\begin{equation*}
\frac{|{\cal L}|!}{(|{\cal L}| - V)!}
\end{equation*}
assignments of elements of ${\cal L}$ to the $l_i$'s. We will show that $\Lambda$ is minimized exclusively by all the assignments from orderings of the elements of ${\cal L}$ such that the $V$ first elements are the $V$ smallest elements of ${\cal L}$ sorted in {\em nondecreasing} order (we refer to it as nondecreasing instead of increasing because, strictly, an increasing order can only be obtained if all the values are distinct). There is only one assignment if the values in ${\cal L}$ are distinct and 
$|{\cal L}| = V$.

Suppose that $n_c$ is the number of concordant pairs of an assignment. 
$(p_i, l_i)$ and $(p_j, l_j)$ 
are said to be concordant if 
\begin{equation}
\sgn(p_i - p_j) \sgn(l_i - l_j) = 1,
\label{concordant_equation}
\end{equation}
where $\sgn$ is the sign function, i.e. 
\begin{equation*}
\sgn(x) = \left\{ 
         \begin{array}{c}
            \frac{x}{|x|} \mbox{~if~} x \neq 0 \\
            0 \mbox{~if~} x = 0.    
         \end{array} 
         \right.
\end{equation*}
Equation \ref{concordant_equation} is equivalent to 
\begin{equation*}
\sgn(p_i - p_j) = \sgn(l_i - l_j) \neq 0.
\end{equation*}

The following lemma gives a crucial necessary condition of optimal configurations:

\begin{lemma}
$\Lambda = \Lambda_{min}$ implies 
that the sequence $l_1$,...,$l_i$,...,$l_V$ is sorted in nondecreasing order, i.e. 
$n_c = 0$ over 
\begin{equation*}
(p_1, l_1),...,(p_i, l_i),...(p_V, l_V), 
\end{equation*}
because the sequence $p_1$,...,$p_i$,...,$p_V$ is sorted in nonincreasing order.
\label{absence_of_concordant_pairs_lemma}
\end{lemma}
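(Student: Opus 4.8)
The plan is to prove the contrapositive via an exchange (swap) argument: if the sequence $l_1,\dots,l_V$ is \emph{not} sorted in nondecreasing order, then there exists an assignment with strictly smaller $\Lambda$, so the given assignment cannot achieve $\Lambda_{min}$. Concretely, if the sequence is not nondecreasing, then (since the $l_i$ are taken from a fixed multiset, swapping two of them yields another valid assignment) there is at least one \emph{inversion}, i.e.\ a pair of indices $i<j$ with $l_i > l_j$. Combined with the global ordering $p_i \ge p_j$ from Equation \ref{ordering_of_probabilities_equation}, such a pair is exactly a concordant pair in the sense of Equation \ref{concordant_equation}: $\sgn(p_i-p_j)\sgn(l_i-l_j)=1$ forces $p_i>p_j$ and $l_i>l_j$ (the equalities $p_i=p_j$ or $l_i=l_j$ give sign $0$, not $1$). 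So ``$n_c=0$'' and ``$l_1,\dots,l_V$ nondecreasing'' are just two phrasings of the same condition once the $p_i$ are nonincreasing, and it suffices to rule out any concordant pair in an optimal configuration.

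The core computation is the effect of swapping the code lengths of a concordant pair. Fix a concordant pair $(p_i,l_i)$, $(p_j,l_j)$ with $p_i>p_j$ and $l_i>l_j$, and let $\Lambda'$ denote the cost after exchanging $l_i$ and $l_j$ (leaving every other assignment untouched; this is legitimate because both values already lie in the selected subset of ${\cal L}$). Recalling $\lambda_k=g(l_k)$, all terms except the $i$-th and $j$-th cancel, giving
\begin{equation*}
\Lambda - \Lambda' = \bigl(p_i g(l_i) + p_j g(l_j)\bigr) - \bigl(p_i g(l_j) + p_j g(l_i)\bigr) = (p_i - p_j)\bigl(g(l_i) - g(l_j)\bigr).
\end{equation*}
Since $p_i - p_j > 0$ and, because $g$ is strictly monotonically increasing and $l_i > l_j$, $g(l_i) - g(l_j) > 0$, we get $\Lambda - \Lambda' > 0$, i.e.\ the swap strictly decreases the cost. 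Hence any assignment containing a concordant pair is not optimal, which is exactly the contrapositive of the lemma.

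I would then close the loop on the combinatorial bookkeeping: a finite sequence of reals is in nondecreasing order if and only if it has no inversion, and every inversion $(i<j,\ l_i>l_j)$ is a concordant pair given the nonincreasing $p$-order; therefore $\Lambda=\Lambda_{min}$ forces the absence of inversions, i.e.\ $l_1\le\cdots\le l_V$, equivalently $n_c=0$. The only subtlety worth a sentence is the role of ties: when some $p_i$ are equal, a pair with $p_i=p_j$ contributes $\sgn(p_i-p_j)=0$ and is \emph{never} counted as concordant, so the lemma only claims sortedness of the $l$-sequence in the sense that it has no \emph{strict} inversion forced against a strict drop in probability; equal-probability blocks may be reshuffled freely, which is consistent with ``nondecreasing'' rather than ``increasing.'' I do not anticipate a real obstacle here — the main (mild) care is making the swap operation precise as an operation on assignments drawn from the multiset ${\cal L}$, and being explicit that the swap does not change which subset of ${\cal L}$ is used, only the pairing with the $p_i$'s, so it is admissible and the one-line difference computation above is valid.
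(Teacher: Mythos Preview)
Your proof is correct and follows essentially the same route as the paper: prove the contrapositive by taking a concordant pair, swapping the two lengths, and computing $\Lambda-\Lambda'=(p_i-p_j)(g(l_i)-g(l_j))>0$ from strict monotonicity of $g$. Your extra remarks on ties and on why the swap is an admissible assignment are a bit more explicit than the paper's version, but the argument is the same.
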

\begin{proof}
We will proof the contrapositive, namely that $n_c > 0$ implies $\Lambda > \Lambda_{min}$ adapting arguments in previous work \citep{Ferrer2012d}. 
Let the pair $(p_i, l_i)$ and $(p_j, l_j)$ be concordant (then $i \neq j$) and such that $1 \leq i,j \leq V$. Without any loss of generality, suppose that $i < j$. Then $p_i > p_j$ by Equation \ref{ordering_of_probabilities_equation} (the case $p_i = p_j$ is excluded as the pair is concordant) and $l_i > l_j$ because the pair is concordant. If we swap $l_i$ and $l_j$, 
  then $\Lambda$ will become 
  \begin{eqnarray*}
  \Lambda' & = & \Lambda - p_i \lambda_i - p_j \lambda_j + p_i \lambda_j + p_j \lambda_i \\
         & = & \Lambda + (p_i - p_j) (\lambda_j - \lambda_i)
  \end{eqnarray*}
  and then the difference between the final and the initial value of $\Lambda$ becomes  
  \begin{eqnarray*}
  \Delta & = & \Lambda' - \Lambda \\ 
         & = & (p_i - p_j) (\lambda_j - \lambda_i).
  \end{eqnarray*}
  It is easy to see that $\Lambda > \Lambda_{min}$ as we wished because $\Delta <0$. Recall that, in this context, $p_i > p_j$ and $l_i > l_j$ (as explained above) and that $g$ is a strictly monotonically increasing function (notice that also $\Delta <0$ when $1 \leq j < i \leq V$).
\end{proof}

An assignment stemming from sorting the $V$ smallest elements of ${\cal L}$ in nondecreasing order (increasing order if the $V$ smallest elements of ${\cal L}$ are distinct) is equivalent to one where $n_c = 0$. The following theorem expresses it formally:

\begin{theorem}
\label{optimal_coding_theorem}
$\Lambda = \Lambda_{min}$ if and only if two conditions are met
\begin{itemize}
\item[1.]
$l_1$,...,$l_i$,...,$l_V$ are the $V$ smallest elements of ${\cal L}$. 
\item[2.]
The sequence $l_1$,...,$l_i$,...,$l_V$ is sorted in nondecreasing order, i.e. 
$n_c = 0$ over 
\begin{equation*}
(p_1, l_1),...,(p_i, l_i),...(p_V, l_V), 
\end{equation*}
because the sequence $p_1$,...,$p_i$,...,$p_V$ is sorted in nonincreasing order.
\end{itemize}
\end{theorem}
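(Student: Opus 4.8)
The plan is to prove the two directions of the biconditional separately, using Lemma~\ref{absence_of_concordant_pairs_lemma} for the forward direction and a direct comparison argument for the converse. For the forward direction, assume $\Lambda = \Lambda_{min}$. Condition~2 is exactly the content of Lemma~\ref{absence_of_concordant_pairs_lemma}, so that part is immediate. It remains to establish Condition~1, namely that the multiset $\{l_1,\dots,l_V\}$ consists of the $V$ smallest elements of ${\cal L}$. I would argue by contradiction: suppose some $l_k$ is not among the $V$ smallest elements of ${\cal L}$. Then there exists an element $m \in {\cal L}$ with $m < l_k$ that is not currently assigned to any of $l_1,\dots,l_V$ (this follows because ${\cal L}$ has at least $V$ elements smaller than $l_k$ but only $V-1$ of the assigned lengths could possibly be $\le m$... more carefully: if $\{l_1,\dots,l_V\}$ are not the $V$ smallest, some unused element $m$ is strictly smaller than some used element $l_k$). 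Reassigning $l_k := m$ strictly decreases $\Lambda$ by $p_k(g(l_k) - g(m)) > 0$, since $p_k > 0$ and $g$ is strictly increasing, contradicting minimality. Hence Condition~1 holds.

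For the converse direction, assume Conditions~1 and 2 hold; I must show $\Lambda = \Lambda_{min}$. Fix any assignment satisfying both conditions and call its cost $\Lambda^{\star}$; I would show $\Lambda^{\star} \le \Lambda$ for every admissible assignment with cost $\Lambda$. The key observation is that all assignments satisfying Condition~1 use the same multiset of lengths (the $V$ smallest elements of ${\cal L}$), so among those, Condition~2 pins down the arrangement up to permutations within blocks of equal $l$-values, and a swapping/rearrangement argument (as in the proof of Lemma~\ref{absence_of_concordant_pairs_lemma}) shows all such arrangements give the identical value of $\Lambda$. For an assignment that violates Condition~1, one can repeatedly swap in smaller unused elements of ${\cal L}$, each step weakly decreasing $\Lambda$ (strictly if the replaced probability is positive and the lengths differ), until Condition~1 is met; therefore any minimizer must satisfy Condition~1, and hence $\Lambda^{\star}$ is the global minimum. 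Combining, $\Lambda^{\star} = \Lambda_{min}$, and since the chosen assignment was arbitrary among those satisfying both conditions, every such assignment attains $\Lambda_{min}$.

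An alternative, cleaner route for the converse: directly invoke a rearrangement-inequality style fact. Once Condition~1 fixes the multiset of lengths to be the $V$ smallest elements of ${\cal L}$, minimizing $\sum_i p_i g(l_i)$ over all pairings of the fixed nonincreasing sequence $p_1 \ge \dots \ge p_V$ with the fixed multiset $\{g(l_i)\}$ is a classical rearrangement problem whose minimum is attained precisely when the $g(l_i)$ are in nondecreasing order, i.e. when $n_c = 0$ (using that $g$ is strictly increasing so the order of $l_i$ matches the order of $g(l_i)$). This is exactly Condition~2. So the pair of conditions is both necessary (forward direction) and sufficient (this rearrangement fact), giving the biconditional.

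The main obstacle I anticipate is the bookkeeping in the forward direction's Condition~1 argument: one must be careful that when $l_k$ is not among the $V$ smallest elements, there genuinely is an \emph{unused} element of ${\cal L}$ strictly smaller than $l_k$ available to swap in, which requires counting multiplicities correctly in the multiset ${\cal L}$ (elements with equal values must be tracked by their occurrences, not their values). A secondary subtlety is handling ties among the $p_i$ and among elements of ${\cal L}$ so that the ``only if'' and ``if'' claims are stated for the right equivalence class of assignments — the paper's phrasing ``nondecreasing'' rather than ``increasing'' already signals that the minimizer is unique only up to permutations within equal-value blocks, and the proof should acknowledge this rather than overclaim uniqueness.
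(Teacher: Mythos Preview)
Your proposal is correct and, for the forward direction, essentially identical to the paper's: the paper also obtains Condition~2 directly from Lemma~\ref{absence_of_concordant_pairs_lemma} and obtains Condition~1 by the same contradiction/swap-in argument (replacing some used $l_k$ by a strictly smaller unused element of ${\cal L}$).

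For the converse, however, your route differs from the paper's. The paper argues the contrapositive: it assumes $\Lambda > \Lambda_{min}$ with Condition~1 holding, fixes a concrete minimizer $(l_i^{min})$ (which by Lemma~\ref{absence_of_concordant_pairs_lemma} is nondecreasing), observes that some index must satisfy $l_i > l_i^{min}$, and deduces that the $l_i$'s cannot then be nondecreasing, so $n_c>0$. Your argument instead (i)~notes that Conditions~1 and~2 determine the multiset of lengths and the pairing up to ties, hence determine $\Lambda$ uniquely, and (ii)~uses the already-proved forward direction to conclude that every minimizer satisfies both conditions, so that common value must equal $\Lambda_{min}$. Your alternative via the rearrangement inequality is the cleanest formulation of the same idea. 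Both converses are valid; yours is more direct and leans on a standard named result, whereas the paper's stays self-contained by reusing only Lemma~\ref{absence_of_concordant_pairs_lemma} and an elementwise comparison. The bookkeeping caveats you flag (multiset multiplicities, ties in the $p_i$'s and $l_i$'s) are exactly the places where care is needed, and your treatment of them is adequate; note that the paper's own proof tacitly uses $p_k>0$ in the swap-in step just as you do.
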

 
\begin{proof}
We proceed proving each direction of the equivalence separately.
\begin{enumerate}
\item
$\Lambda = \Lambda_{min}$ implies conditions 1 and 2 \\
  We will prove the contrapositive, 
namely that the failure of condition 1 or 2 implies $\Lambda > \Lambda_{min}$. 
  \begin{enumerate}
  \item
  Suppose that condition 1 fails. Then there is an element $l'$ in ${\cal L}\setminus \{l_1,...,l_i,...,l_V\}$ such that $l' < \max(l_1,...,l_i,...,l_V)$, where $\setminus$ is the multiset difference operator. Suppose that $k$ is the index of a magnitude such that $1 \leq k \leq V$ and $l_k > l'$. Assigning $l'$ to $l_i$, $\Lambda$ will decrease strictly because $l_k > l'$. Thus, the original value of $\Lambda$ satisfied $\Lambda > \Lambda_{min}$.  
  \item
  Suppose that condition 2 fails. Then $\Lambda > \Lambda_{min}$ by the contrapositive of Lemma \ref{absence_of_concordant_pairs_lemma}.
  \end{enumerate}

\item
Conditions 1 and 2 imply $\Lambda = \Lambda_{min}$ \\
We will show the contrapositive, 
namely that $\Lambda > \Lambda_{min}$ implies that condition 1 or 2 fails.
  $\Lambda > \Lambda_{min}$ can happen when condition 1 fails, as we have seen above. Suppose that condition 1 does not fail. Can we conclude that condition 2 fails? 
  Let $l_i^{min}$ and $\lambda_i^{min}$ be the values of $l_i$ and $\lambda_i$, respectively, in some minimum assignment, namely one yielding $\Lambda = \Lambda_{min}$.
  By Lemma \ref{absence_of_concordant_pairs_lemma}, the sequence $l_1^{min}$,...,$l_i^{min}$,...,$l_V^{min}$ is sorted in nondecreasing order and its corresponding number of concordant pairs is $n_c^{min} = 0$.
  Notice that $\Lambda > \Lambda_{min}$ implies that the $V$ smallest values of ${\cal L}$ are not identical (otherwise $\Lambda = \Lambda_{min}$ for any assignment satisfying condition 1). With this clarification in mind, it is easy to see that there must be some $i$ such that $\lambda_i > \lambda_i^{min}$, or equivalently, $l_i > l_i^{min}$. If that did not happen, then one would have $\lambda_j \leq \lambda_j^{min}$ for each $j$ such that $1 \leq j \leq V$ and then $\Lambda \leq \Lambda_{min}$, contradicting $\Lambda > \Lambda_{min}$. Crucially, such particular $i$ prevents the $l_i$'s from having the non-decreasing order that is defined by the $l_i^{min}$'s, leading to $n_c > 0$ by condition 1 and $n_c^{min} = 0$, as we wished. 
\end{enumerate}
\end{proof}

The Kendall $\tau$ correlation between the $p_i$'s and the $l_i$'s is \citep{Conover1999a}
\begin{equation*}
\tau(p_i, l_i) = \frac{n_c - n_d}{{V \choose 2}},
\end{equation*}
where $n_d$ is the number of discordant pairs. $(p_i, l_i)$ and $(p_j, l_j)$ 
are said to be discordant if 
\begin{equation*}
\sgn(p_i - p_j) \sgn(l_i - l_j) = -1.
\end{equation*}
or, equivalently,
\begin{equation*}
\sgn(p_i - p_j) = -\sgn(l_i - l_j) \neq 0.
\end{equation*}
In our context,
\begin{equation*}
\tau(p_i, l_i) = \frac{1}{{V \choose 2}} \sum_{i < j} \sgn(p_i - p_j) \sgn(l_i - l_j).
\end{equation*}

An implication of optimal coding (minimum $\Lambda$) is that $\tau(p_i, l_i)$ cannot be positive. Formally:
\begin{corollary}
$\Lambda = \Lambda_{min}$ implies $\tau(p_i, l_i) \leq 0$ with equality if and only if $n_d = 0$.
\label{Kendal_correlation_corollary}
\end{corollary}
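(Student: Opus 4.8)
The plan is to reduce the statement directly to Lemma~\ref{absence_of_concordant_pairs_lemma}, so the ``proof'' is really just an algebraic rewriting. First I would invoke that lemma: since $\Lambda = \Lambda_{min}$, the sequence $l_1,\dots,l_V$ is sorted in nondecreasing order while $p_1,\dots,p_V$ is nonincreasing, hence there are no concordant pairs, i.e.\ $n_c = 0$. Substituting $n_c = 0$ into the displayed expression $\tau(p_i,l_i) = \frac{n_c - n_d}{{V \choose 2}}$ immediately gives $\tau(p_i,l_i) = -\,n_d \big/ {V \choose 2}$.

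Next I would note the two trivial sign facts: $n_d \geq 0$ because it counts pairs, and ${V \choose 2} > 0$ for $V \geq 2$ (the case $V \leq 1$ is vacuous, with no pairs at all and $\tau$ conventionally $0$). Therefore $\tau(p_i,l_i) \leq 0$. The equality clause then reads off the same identity: $\tau(p_i,l_i) = 0$ if and only if $n_d \big/ {V \choose 2} = 0$, i.e.\ if and only if $n_d = 0$.

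I do not expect any genuine obstacle — the corollary is essentially Lemma~\ref{absence_of_concordant_pairs_lemma} re-expressed through Kendall's $\tau$. The only points deserving a clarifying remark are (i) the degenerate case $V = 1$, where the claim is empty, and (ii) the harmlessness of ties: pairs with $p_i = p_j$ or $l_i = l_j$ contribute to neither $n_c$ nor $n_d$, so once the lemma has forced $n_c = 0$ the identity $\tau(p_i,l_i) = -\,n_d \big/ {V \choose 2}$ holds regardless of how many such ties occur. With those two sentences in place, the argument is complete.
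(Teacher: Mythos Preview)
Your proposal is correct and follows essentially the same route as the paper: invoke Lemma~\ref{absence_of_concordant_pairs_lemma} to obtain $n_c = 0$, substitute into the definition of $\tau$ to get $\tau(p_i,l_i) = -\,n_d\big/{V \choose 2}$, and read off the sign and the equality case from $n_d \geq 0$. Your additional remarks on the degenerate case $V \leq 1$ and on ties are not in the paper's proof, but they are harmless clarifications rather than a different argument.
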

\begin{proof}
By Lemma \ref{absence_of_concordant_pairs_lemma} $\Lambda = \Lambda_{min}$ implies $n_c = 0$ and then 
\begin{equation*}
\tau(p_i, l_i) = -\frac{n_d}{{V \choose 2}}.
\end{equation*}
Since $n_d \geq 0$ one has $\tau(p_i, l_i) \leq 0$, with equality if and only if $n_d = 0$.
\end{proof}

\subsection{Optimal non-singular coding}

Under the scheme 
of uniquely decodable codes, standard information theory tells us that the minimization of $L$ leads to \citep{Cover2006a}
\begin{equation}
l_i \propto \lceil -\log_N p_i \rceil, 
\label{optimal_coding_supplementary_equation}
\end{equation} 
which is indeed a particular case of Zipf's law of abbreviation. This corresponds to the minimization of $\Lambda$ with $g$ as the identity function in our framework.  
Here we wish to minimize $\Lambda$ with $l_i$ as the length of the $i$-th most frequent type when only the $p_i$'s are prescribed under the non-singular coding scheme (Figure \ref{hiearchy_of_coding_schemes_figure}). 

Under non-singular coding, the set of available strings consists of all the different strings of symbols that can be built with an alphabet of size
$N$. There are $N^l$ different strings of length $l$. Let $S$ be the infinite sequence of these strings sorted by increasing length (the relative ordering of strings of the same length is arbitrary). If empty strings are not allowed, the strings in positions $1$ to $N$ have length $1$, the strings in positions $N+1$ to $N+N^2$ have length 2, and so on as in \ref{shortest_words_equation} for $N = 2$. 

\begin{corollary}
Optimal non-singular coding consists of assigning the $i$-th string of $S$ to the $i$-th most probable type for $1 \leq i \leq V$.
\label{optimal_non-singular_coding_corollary}
\end{corollary}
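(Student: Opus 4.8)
The plan is to reduce Corollary~\ref{optimal_non-singular_coding_corollary} to Theorem~\ref{optimal_coding_theorem} by identifying the right multiset~${\cal L}$ and the right choice of~$g$. First I would take ${\cal L}$ to be the multiset of lengths of all the strings in the sequence~$S$, i.e. ${\cal L}$ contains the value~$1$ with multiplicity~$N$, the value~$2$ with multiplicity~$N^2$, and in general the value~$l$ with multiplicity~$N^l$ (for $l \geq 1$, assuming empty strings are disallowed so $l_{min}=1$). Since this multiset is infinite we certainly have $|{\cal L}| \geq V$, and since the $\lambda_i$'s under the non-singular scheme are precisely $g(l_i)$ with $l_i$ a string length and $g$ the identity (so $\Lambda = L$), Theorem~\ref{optimal_coding_theorem} applies verbatim: minimizing~$L$ over all assignments of strings to types is the same as minimizing~$\Lambda$ over all assignments of elements of~${\cal L}$ to the~$l_i$'s, because the only thing that affects~$L$ is the multiset of lengths chosen, not the particular strings carrying those lengths (non-singularity imposes no constraint beyond "distinct strings", and distinct strings of a common length are freely interchangeable).

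Next I would invoke Theorem~\ref{optimal_coding_theorem} directly: $L = L_{min}$ holds if and only if (1) the chosen lengths $l_1,\dots,l_V$ are the $V$ smallest elements of~${\cal L}$, and (2) they are arranged in nondecreasing order against the nonincreasing~$p_i$'s. Condition~(1) says exactly that the $V$ lengths used are those of the first $V$ strings of~$S$ (the $N$ strings of length~$1$, then the $N^2$ of length~$2$, and so on, truncated at~$V$), since $S$ is by construction sorted by increasing length. Condition~(2) says that the shorter strings go to the more probable types. Combining the two, the string assigned to the $i$-th most probable type must be a string whose length equals the length of the $i$-th string of~$S$, for each~$i$ — which is precisely the assertion that the $i$-th string of~$S$ is assigned to the $i$-th most probable type, up to the free relabelling of equal-length strings that $S$ itself leaves arbitrary.

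The one genuine subtlety — and the step I expect to need the most care — is the correspondence between "assignments of elements of~${\cal L}$" in the sense of Section~\ref{optimal_coding_with_given_magnitudes_section} and actual non-singular codings. An assignment of magnitudes is admissible as a non-singular coding only if, for each length value~$l$, the number of types assigned that length does not exceed the multiplicity~$N^l$ of~$l$ in~${\cal L}$; but this is automatically respected by any genuine selection of $V$ distinct elements of the multiset~${\cal L}$, so there is no gap here, merely a remark to make. Conversely every non-singular coding induces such a selection by reading off its string lengths. So the bijection between the two optimization problems is clean, and the corollary follows. I would also briefly note the boundary case $N=1$: then ${\cal L} = \{1,2,3,\dots\}$ with each value appearing once, $S = a, aa, aaa, \dots$, and the corollary still holds, now forcing $l_i = i$, consistent with the linear-in-rank behaviour announced in the introduction.
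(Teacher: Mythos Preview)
Your proposal is correct and follows essentially the same route as the paper: define ${\cal L}$ as the multiset of lengths of the strings in $S$, take $g$ to be the identity so that $\Lambda = L$, and read off the corollary from the two conditions of Theorem~\ref{optimal_coding_theorem}. Your treatment is in fact more careful than the paper's own proof in making explicit the bijection between admissible length-assignments and genuine non-singular codings, the freedom in permuting equal-length strings, and the $N=1$ boundary case, but these are elaborations of the same argument rather than a different one.
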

\begin{proof}
We define ${\cal L}$ as the multiset of the lengths of the strings in $S$. 
As there is a one-to-one correspondence between an element of ${\cal L}$ and an available string, the application of theorem \ref{optimal_coding_theorem} with $g$ as the identity function gives that the optimal coding is such that 
\begin{itemize}
\item
The sequence $l_1,...,l_i,...,l_V$ contains the $V$ smallest lengths, and then comprises the codes that are the shortest possible strings. 
\item
$l_1,...,l_i,...,l_V$ is sorted in nondecreasing order, and then the $i$-th type is assigned the $i$-th shortest string.
\end{itemize}
\end{proof}


\subsection{Length as a function of frequency rank in optimal non-singular coding}

We aim to derive the relationship between the rank of a type (defined according to its probability) and its length in case of optimal non-singular codes for $N \geq 1$. 
Suppose that $p_i$ is the probability of the $i$-th most probable type and that $l_i$ is its length. The following lemma addresses a generalization of the problem:

\begin{lemma}
If rank $i$ is assigned the shortest possible string that has length $l_{min}$ or greater then
\begin{equation}
l_i = \left\{
         \begin{array}{ll}    
         \left\lceil \log_N \left((1-1/N)i + N^{l_{min}-1}\right) \right\rceil & \mbox{~for~} N > 1 \\ 
         i + l_{min} - 1 & \mbox{~for~} N = 1. \\
         \end{array}
      \right.
\label{optimal_non-singular_coding_equation}
\end{equation} 
\end{lemma}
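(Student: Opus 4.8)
The plan is to prove the two branches of Equation \ref{optimal_non-singular_coding_equation} separately, each by a direct count of how many admissible strings (those of length at least $l_{min}$) become available as the length grows.

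First I would dispose of the case $N = 1$. With a one-symbol alphabet there is exactly one string of each length, so filling ranks greedily with the shortest unused string of length at least $l_{min}$ assigns the length-$l_{min}$ string to rank $1$, the length-$(l_{min}+1)$ string to rank $2$, and in general the length-$(l_{min}+i-1)$ string to rank $i$; hence $l_i = i + l_{min} - 1$.

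For $N > 1$ the central object is $c(\ell)$, the number of distinct strings whose length lies in $\{l_{min}, l_{min}+1, \dots, \ell\}$. Since there are $N^k$ strings of length $k$, a geometric sum gives $c(\ell) = \sum_{k=l_{min}}^{\ell} N^k = (N^{\ell+1} - N^{l_{min}})/(N-1)$. Because the ranks receive the admissible strings in nondecreasing order of length, the string at rank $i$ has length $l_i = \min\{\ell \ge l_{min} : c(\ell) \ge i\}$. I would then solve $c(\ell) \ge i$, which is equivalent to $N^{\ell+1} \ge (N-1)i + N^{l_{min}}$ and hence to $\ell \ge \log_N\!\big((N-1)i + N^{l_{min}}\big) - 1 = \log_N\!\big((1-1/N)i + N^{l_{min}-1}\big)$; the least integer meeting this is the ceiling of the right-hand side, which is exactly the claimed formula. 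To finish I would check that this value is always at least $l_{min}$: since $(1-1/N)i + N^{l_{min}-1} > N^{l_{min}-1}$ for every $i \ge 1$, the logarithm exceeds $l_{min}-1$, so its ceiling is $\ge l_{min}$ and the side constraint $\ell \ge l_{min}$ in the minimum is never active.

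The one point that needs a little care is the behaviour of the ceiling at the breakpoints, i.e.\ confirming that when $(N-1)i + N^{l_{min}}$ is an exact power of $N$ — which happens precisely when $i$ equals some $c(\ell)$ — the formula returns $\ell$ and not $\ell+1$, and symmetrically that it never drops below $l_{min}$. Both are immediate from the monotonicity of $c$ together with the fact that $\lceil x\rceil$ is the least integer that is at least $x$, so beyond stating them carefully there is nothing substantial to do; the remainder is the elementary geometric-series computation above.
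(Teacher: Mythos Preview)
Your proposal is correct and follows essentially the same route as the paper: both count the admissible strings up to a given length via the geometric sum $\sum_{k=l_{min}}^{\ell} N^k$ and then invert this cumulative count to obtain the ceiling formula, with the $N=1$ case handled separately by direct enumeration. If anything, your version is slightly more careful than the paper's, since you explicitly frame $l_i$ as $\min\{\ell \ge l_{min} : c(\ell) \ge i\}$ and verify both the breakpoint behaviour and the side constraint $\ell \ge l_{min}$, whereas the paper simply applies the ceiling without spelling out these checks.
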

\begin{proof}
Then the largest rank of types of length $l$ is  
\begin{equation*}
i = \sum_{k=l_{min}}^l N^k.
\end{equation*}
When $N > 1$, we get
\begin{equation*}
i = \frac{N^{l+1}-N^{l_{min}}}{N-1}
\end{equation*}
and equivalently 
\begin{equation*}
N^l = \frac{1}{N}[(N-1)i + N^{l_{min}}].
\end{equation*}
Taking logs on both sides of the equality, one obtains
\begin{equation*}
l = \frac{\log \left( \frac{1}{N}[(N-1)i + N^{l_{min}}] \right) }{\log N}.
\end{equation*}
The result can be generalized to any rank of types of length $l$ as
\begin{equation} 
l = \left\lceil \frac{\log \left( \frac{1}{N}[(N-1)i + N^{l_{min}}] \right) }{\log N} \right\rceil. 
\label{itermediate_equation}
\end{equation}
Changing the base of the logarithm to $N$, one obtains 
\begin{equation*} 
l = \left\lceil \log_N \left((1-1/N)i + N^{l_{min}-1}\right) \right\rceil. 
\end{equation*}
Alternatively, Equation \ref{itermediate_equation} also yields 
\begin{eqnarray*}
l & = & \left\lceil \frac{\log [(N-1)i + N^{l_{min}}]}{\log N} - 1 \right\rceil\\
  & = & \left\lceil \log_N [(N-1)i + N^{l_{min}}] \right\rceil - 1.
\end{eqnarray*}
The case $N = 1$ is trivial, one has $l = i + l_{min} - 1$. 
Therefore, the length of the $i$-th most probable type follows Equation \ref{optimal_non-singular_coding_equation}.	
\end{proof}

The previous arguments allow one to conclude:

\begin{corollary}
In case of optimal coding with non-singular codes, the length of the $i$-th most probable type follows Equation \ref{optimal_non-singular_coding_equation} with $l_{min} = 1$.
\end{corollary}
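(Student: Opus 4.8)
The plan is to obtain this corollary as an immediate corollary of the two results just established, with no new computation required. First I would invoke Corollary~\ref{optimal_non-singular_coding_corollary}: an optimal non-singular coding assigns to the $i$-th most probable type the $i$-th string of the sequence $S$, where $S$ enumerates all strings over the alphabet of size $N$ sorted by nondecreasing length. Consequently the length $l_i$ of the code assigned to rank $i$ equals the length of the $i$-th string in that enumeration, for $1 \le i \le V$.

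Next I would observe that this enumeration realises precisely the greedy rule assumed in the preceding lemma. Since empty strings are not allowed in the non-singular scheme, the smallest admissible length is $l_{min}=1$; and processing the strings of $S$ in order — that is, exhausting all $N^{l}$ strings of a given length $l$ before passing to length $l+1$ — is exactly the same as handing each successive rank $i$ the shortest not-yet-used string of length at least $1$. The lemma therefore applies verbatim with $l_{min}=1$. Substituting $l_{min}=1$ (so that $N^{l_{min}-1}=1$) into Equation~\ref{optimal_non-singular_coding_equation} gives
\begin{equation*}
l_i = \left\{
         \begin{array}{ll}
         \left\lceil \log_N \left((1-1/N)i + 1\right) \right\rceil & \mbox{for } N > 1, \\
         i & \mbox{for } N = 1,
         \end{array}
      \right.
\end{equation*}
which is the asserted formula.

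The only point deserving a word of care — and it is an ``obstacle'' only in the mildest sense — is to confirm that the ordering of $S$ genuinely coincides with the lemma's ``shortest available length'' hypothesis, i.e. that there are exactly $N^{l}$ strings of length $l$ so that the cumulative counts used in the lemma's proof ($i = \sum_{k=l_{min}}^{l} N^k$) are the positions in $S$ at which the length increments. This is immediate from the definition of $S$ and a one-to-one correspondence between strings and the elements of the multiset $\mathcal{L}$ of their lengths (already noted in the proof of Corollary~\ref{optimal_non-singular_coding_corollary}). I would therefore keep the write-up to the two-sentence reduction above plus the substitution, rather than reproving anything from the lemma.
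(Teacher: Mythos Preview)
Your proposal is correct and takes essentially the same approach as the paper: the paper simply writes ``The previous arguments allow one to conclude'' before stating the corollary, and those previous arguments are precisely Corollary~\ref{optimal_non-singular_coding_corollary} together with the lemma, combined by setting $l_{min}=1$. If anything, your write-up is more explicit than the paper's, which provides no separate proof and only displays the resulting $N>1$ formula afterward.
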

When $N > 1$, one obtains
\begin{equation*}
l_i = \left\lceil \log_N \left((1-1/N)i + 1 \right) \right\rceil,
\end{equation*}
the same conclusion was reached by \citet{Sudan2006a} though lacking a detailed explanation.

\subsection{Relationships with other mathematical problems}

We have investigated a problem of optimal coding where magnitudes stem from a given multiset of values. The problem is related to other mathematical problems outside coding theory. Notice that $\Lambda$ can be seen as a scalar product of two vectors, i.e. $\vec{p} = \{p_1,...,p_i,...,p_V\}$ and $\vec{\lambda} = \{\lambda_1,...,\lambda_i,...,\lambda_V\}$ and $L$ as a scalar product of $\vec{p}$ and 
$\vec{l} = \{l_1,...,l_i,...,l_V\}$. 
When $|{\cal L}| = V$ the problem is equivalent to minimizing the scalar (or dot) product of two vectors (of positive real values) over all the permutations of the content of each vector \citep{Aadam2016a}.
By the same token, 
the problem is equivalent to minimizing the Pearson correlation between $\vec{p}$ and $\vec{\lambda}$ when the content (but not the order) of each vector is preserved. Recall that the Pearson correlation between $\vec{p}$ and $\vec{\lambda}$ can be defined as \citep{Conover1999a}
\begin{equation}
r(\vec{p}, \vec{\lambda}) = \frac{\vec{p} \cdot \vec{\lambda} - \mu_p \mu_\lambda}{\sigma_p \sigma_\lambda},
\label{Pearson_correlation_equation}
\end{equation}
where $\mu_x$ and $\sigma_x$ are, respectively, the mean and the standard deviation of vector $\vec{x}$. 

The link with Pearson correlation goes back to the original coding problem: such a correlation has been used to find a concordance with the law of abbreviation that is in turn interpreted as a sign of efficient coding \citep{Semple2010a}. According to Equation \ref{Pearson_correlation_equation}, such a correlation turns out to be a linear transformation of the cost function. Put differently, minimizing $\Lambda$ with prescribed $p_i$'s and with 
$\lambda_i$ as the identity function (as it is customary in standard coding theory), is equivalent to minimizing the Pearson correlation at constant mean and standard deviation of both probabilities and magnitudes. Therefore, the Pearson correlation is a measure of the degree of optimization of a system when these means and standard deviations are constant (it is implicit that the standard deviations are not zero, otherwise the Pearson correlation is not defined).

\section{The maximum entropy principle}

\label{maximum_entropy_principle_section}

Now we turn onto the question of making a safe prediction on the distribution of word ranks in case of optimal non-singular coding. The maximum entropy principle states that \citep{Kesavan2009a} \\
~\\
{\em Out of all probability distributions consistent with a given set of constraints, the distribution
with maximum uncertainty should be chosen.} \\ 
~\\
The distribution of word frequencies has been derived via maximum entropy many times with similar if not identical methods \citep{Mandelbrot1966,Naranan1992b,Naranan1993,Ferrer2003c,Liu2008c,Baek2011a,Visser2013a}. Depending on the study, the target  
was Zipf's rank-frequency distribution, Equation \ref{Zipfs_law_equation}, \citep{Mandelbrot1966,Naranan1993, Liu2008c} or its sister law with frequency as the random variable \citep{Naranan1992b,Ferrer2003c}, stating that the $n_f$, the number of words of frequency $f$, satisfies approximately 
\begin{equation*}
n_f \approx f^{-\beta}
\end{equation*}
with $\beta \approx 2$ \citep{Zipf1949a,Moreno2016a}. In some cases, maximum entropy is used as an explanation for the ubiquity of power-law-like distributions, with Zipf's law for word frequencies or its sister as a particular case \citep{Baek2011a, Visser2013a}.
For simplicity, here we revisit the essence of the principle focusing on how our results on optimal non-singular coding can be used to derive different rank distributions.

The maximum entropy principle allows one to obtain a distribution that maximizes the entropy of probability ranks, namely, 
\begin{equation*}
H = - \sum_{i=1}^{V} p_i \log p_i
\end{equation*}
under certain constraints on cost over the $i$'s and a couple of elementary constraints on the $p_i$'s, i.e. $p_i \geq 0$ and 
\begin{equation*}
\sum_{i=1}^{V} p_i = 1.
\end{equation*} 
See \citet{Kapur1992a} and \cite{Harremoes2001a} for an overview.
For simplicity, we assume a single non-elementary cost constraint, namely $L$, as defined in Equation \ref{mean_code_length_equation}.
For simplicity, we assume that $V$ is not finite. See \citet{Visser2013a} for an analysis of the case of more than one non-elementary constraint and a comparison of
the finite versus infinite case. See \citet{Harremoes2001a} for some critical aspects of the traditional application of maximum entropy.

In our simple setup, the method leads to distributions of the form
\begin{equation}
p_i = \frac{e^{-\alpha l_i}}{Z},
\label{maximum_entropy_probability_equation}
\end{equation}
where $\alpha$ is a Lagrange multiplier and 
\begin{equation*}
Z = \sum_{j=1}^{\infty} e^{-\alpha l_j }
\end{equation*}
is the partition function.
In case of optimal non-singular coding, we have two cases. If $N > 1$ then $l_i \approx \log_N i$ for sufficiently large $N$ (Equation \ref{optimal_non-singular_coding_equation}),
which transforms Equation \ref{maximum_entropy_probability_equation} into a zeta distribution, i.e.  
\begin{equation}
p_i = \frac{1}{Z} i^{-\alpha}
\label{zeta_distribution_equation}
\end{equation}
while the partition function becomes
\begin{equation*}
Z = \sum_{j=1}^{\infty} j^{-\alpha},
\end{equation*}
namely the Riemann zeta function. The zeta distribution is an approximation to Zipf's law for word frequencies.

When $N = 1$ then $l_i =  i$ (Equation \ref{optimal_non-singular_coding_equation} with $l_{min} = 1$),
which transforms Equation \ref{maximum_entropy_probability_equation} into an exponential distribution of word frequencies, i.e. 
\begin{equation}
p_i = \frac{1}{Z} e^{-\alpha i}
\label{exponential_distribution_equation}
\end{equation}
while 
\begin{equation*}
Z = \sum_{j=1}^{\infty} e^{-\alpha i}.
\end{equation*}
Applying the same arguments, it is possible to obtain an exponential distribution via maximum entropy for $N > 1$ if $l_i = i$. In that case, however, the coding would be non-singular (every type would be coded with a string of distinct length) but would not be optimal.
Equation \ref{exponential_distribution_equation} matches the exponential-like distribution that is found for certain linguistic units \citep{Tuzzi2010a,Ramscar2019a}. 
In sum, this distribution may result, according to the maximum entropy principle, from either optimal or suboptimal coding.

Although Equation \ref{exponential_distribution_equation} is for a discrete random variable, it has the form of the popular exponential distribution for continuous random variables. That equation actually matches the definition of the customary geometric distribution in Equation \ref{geometric_distribution_equation}.
To see it, notice that $Z$ is the summation of a geometric series where the first term $a$ and the common factor $r$ are the same, i.e. $a=r=e^{-\alpha}$. Therefore, assuming $|r|<1$, i.e. $\alpha > 0$, 
\begin{eqnarray*}
Z & = & \frac{a}{1-r} \\
  & = & \frac{e^{-\alpha}}{1 - e^{-\alpha}}.
\end{eqnarray*} 
Then equation \ref{exponential_distribution_equation} can be rewritten equivalently as
\begin{equation}
p_i = \frac{1 - e^{-\alpha}}{e^{-\alpha}} (e^{-\alpha})^i.
\label{new_exponential_distribution_equation}
\end{equation}
The substitution $q = 1 - e^{-\alpha}$ transforms Equation \ref{new_exponential_distribution_equation} into the customary definition of a geometric distribution in Equation \ref{geometric_distribution_equation} as we wished. 
  

\section{The optimality of random typing}

\label{random_typing_section}

The results on optimal coding above allow one to unveil the optimality of typing at random, assuming that the space bar is hit with a certain probability and that letters are equally likely \citep{Miller1957}. It has been argued many times that random typing reproduces Zipf's rank-frequency distribution (e.g. \citet{Miller1957,Miller1963,Li1992b,Suzuki2004a}). In particular, Miller concluded that the law 
{\em ``can be derived from simple assumptions that do not strain one's credulity (unless the random placement of spaces seems incredible), without appeal to least effort, least cost, maximal information, or any other branch of the calculus of variations. The rule is a simple consequence of those intermittent silences which we imagine to exist between successive words.'' \citep{Miller1957}.}
Similarly, \citet{Li1998} argued that {\em ``random typing shows that a random process can mimic a cost-cutting process, but not purposely.''} A similar view is found in reviews of Zipf's law for word frequencies, where optimization and random typing are considered to be different mechanisms \citep{Mitzenmacher2003a,Newman2004a}. The view of random typing as detached from cost reduction is also found in research on the origins of Zipf's law of abbreviation \citep{Kanwal2017a,Chaabouni2019a}.  
Leaving aside the problem of the poor fit of random typing to their original target, i.e. the distribution of word frequencies \citep{Ferrer2009a,Ferrer2009b}, these views are also problematic because random typing and least cost are not really independent issues. We will show it through the eye of the problem of compression. 

The optimality of random typing can be seen in two ways. One through recoding, namely replacing each word it produces by another string so as to minimize $L$ under the non-singular coding scheme. The other - indeed equivalent - consists of supposing that random typing is used to code for numbers whose probability matches that of the words produced by random typing. In both cases, we will show that the value of $L$ of a random typing process cannot be reduced and thus it is optimal. Put differently, we will show that there is no non-singular coding system that can do it more efficiently (with a smaller $L$) than random typing.

It is easy to see that the strings that random typing produces are optimal according to Corollary \ref{optimal_non-singular_coding_corollary}. 
Recall that the probability of a ``word'' $w$ of length $l$ in random typing is \citep[p. 838]{Ferrer2009a}
\begin{equation}
p_l(w) = \left( \frac{1-p_s}{N} \right) ^{l} \frac{p_s}{(1-p_s)^{l_{min}}},
\label{probability_of_a_word_in_random_typing_equation}
\end{equation}
where $l$ is the length of $w$, $p_s$ is the probability of producing the word delimiter (a whitespace), $N$ is the size of the alphabet that the words consist of ($N > 0$) and $l_{min}$ is the minimum word length ($l_{min} \geq 0$). 
Hereafter we assume for simplicity that $0 < p_s < 1$. If $p_s = 0$, strings never end. If $p_s = 1$, all the strings have length $l_{min}$ and then random typing has to be analyzed following the arguments for unconstrained optimal coding in Section \ref{unconstrained_optimal_coding_section}.

We will show that after sorting nondecreasingly all possible strings of length at least $l_{min}$ that can be formed with $N$ letters, the $i$-th most likely type of random typing receives the $i$-shortest string.
First, Equation \ref{probability_of_a_word_in_random_typing_equation} indicates that all words of the same length are equally likely and $p_{l+1}(w) \geq p_l(w)$ for $l \geq l_{min}$ because $p_s$, $l_{min}$ and $N$ are constants. Therefore, the ranks of words of length $l$ are always larger than those of words of length $l+1$. Keeping this property in mind, words of the same length are assigned an arbitrary rank. 
Second, $p_l(w) > 0$ for all the $N^l$ different words of length $l$ that can be formed. Therefore, all available strings of a given length are used.
The optimality of random typing for $N = 2$ and $l_{min} = 1$ can be checked easily in Table \ref{binary_random_typing_table}. The exact relationship between rank and length in random typing will be derived below.

\begin{table}
\caption{\label{binary_random_typing_table} The probability ($p_i$), the length ($l_i$) of the $i$-th most frequent string (code) or random typing with $N = 2$ and $l_{min} = 1$. $l_i$ is calculated via Equation \ref{optimal_non-singular_coding_equation} with $N=2$ and $l_{min}=1$. $p_i$ is calculated applying $l_{min}=1$, $N=2$ and $l_i$ to Equation \ref{probability_versus_rank_in_random_typing_equation}. }
\centering
\begin{tabular}{rrrr}
Code & $i$ & $l_i$ & $p_i$ \\ \hline
a    & 1 & 1 & $p_s/2$ \\
b    & 2 & 1 & $p_s/2$ \\
aa   & 3 & 2 & $(1-p_s)p_s/4$ \\
ab   & 4 & 2 & $(1-p_s)p_s/4$ \\
ba   & 5 & 2 & $(1-p_s)p_s/4$ \\
bb   & 6 & 2 & $(1-p_s)p_s/4$ \\
aaa  & 7 & 2 & $(1-p_s)^2 p_s/8$ \\
...  & ... & ... & ... 
\end{tabular}
\end{table}
 
Random typing satisfies a particular version of Zipf's law of abbreviation where the length of a word ($l$) is a linear function of its probability ($p$), i.e.
\begin{equation}
l = a \log p + b, 
\label{law_of_abbreviation_in_random_typing_equation}
\end{equation}
where $a$ and $b$ are constants ($a < 0$). Namely, the probability of a word is determined by its length (the characters constituting the words are irrelevant),
Equation \ref{probability_of_a_word_in_random_typing_equation} allows one to express $l$ as a function of $p(w)$. Rearranging the terms of Equation \ref{probability_of_a_word_in_random_typing_equation}, taking logarithms, and replacing $p(w)$ by $p$,  one recovers Equation \ref{law_of_abbreviation_in_random_typing_equation} with 
\begin{equation*}
a = \left( \log \frac{1-p_s}{N} \right)^{-1}
\end{equation*} 
and 
\begin{equation*}
b = a \log \frac{(1-p_s)^{l_{min}}}{p_s}.
\end{equation*}

Does random typing also satisfy Zipf's law for word frequencies (Equation \ref{Zipfs_law_equation})? Mandelbrot was aware ``{\em that the relation between rank and probability is given by a step function}'' for the random typing model we have considered here,  but he argued that {\em ``such a relation cannot be represented by any simple analytic expression''} \citep[p. 364]{Mandelbrot1966}. 
Knowing that random typing is optimal from the standpoint of non-singular coding it is actually possible to obtain a simple analytic expression for $p_i$, the probability  that random typing produces a word of rank $i$. Replacing $p_l(w)$ by $p_i$ and $l$ by $l_i$, Equation \ref{probability_of_a_word_in_random_typing_equation} becomes
\begin{equation}
p_i = \left( \frac{1-p_s}{N} \right) ^{l_i} \frac{p_s}{(1-p_s)^{l_{min}}},
\label{probability_versus_rank_in_random_typing_equation}
\end{equation}
where $l_i$ the length of the word of rank $i$ that is given by Equation \ref{optimal_non-singular_coding_equation}. 
To our knowledge, this is the first exact Equation for $p_i$. In previous research, only approximate expressions for $p_i$ have been given \citep{Miller1957,Miller1963,Mandelbrot1966,Li1992b}. 
These non-rigorous approximations correspond to the Zipf-Mandelbrot law,	
\begin{equation}
p_i \propto (i + b)^{-\alpha},
\label{Zipf-Mandelbrots_law_equation}
\end{equation}
a generalization of Zipf's law (Equation \ref{Zipfs_law_equation}) with an additional parameter $b > 0$ \citep{Mandelbrot1966}, that is actually a smoothed version of Equation \ref{probability_versus_rank_in_random_typing_equation}.
If ranks are unbounded as in the random typing model, the Zipf-Mandelbrot law can be defined exactly as 
\begin{equation} 
p_i = \frac{1}{\zeta(\alpha,b)}(i + b)^{-\alpha},
\label{Hurwitz_zeta_distribution_equation}
\end{equation}
where
\begin{equation}
\zeta(\alpha,b) = \sum_{i = 0}^{\infty} (i + b)^{-\alpha}
\end{equation} 
is the Hurwitz zeta function. Equation \ref{Hurwitz_zeta_distribution_equation} becomes the definition of the zeta distribution (Equation \ref{zeta_distribution_equation}) when $b = 1$. 

Figure \ref{random_typing_figure} a) shows $p_i$ versus $i$ for $N=26$ and $p_s=0.18$ applying Equation \ref{probability_versus_rank_in_random_typing_equation}. These are the parameters that \citet{Miller1957} used in his classic article on random typing to mimic English. 
The stepwise shape, that is missing in the Zipf-Mandelbrot law (equations \ref{Zipf-Mandelbrots_law_equation} and \ref{Hurwitz_zeta_distribution_equation}), can be smoothed by introducing a bias towards certain letters \citep{Li1992b,Ferrer2009b} as in the original setup all letters are equally likely. Reducing $N$ as much as possible will also smooth the shape (reducing $N$ is a particular case of bias that consists of turning 0 the probability of certain symbols). Figure \ref{random_typing_figure} b) shows the smoothing effect of $N=2$ (corresponding to the examples given in Table~\ref{binary_random_typing_table}). Notice that $N$ cannot be reduced further: we have shown above that $N=1$ transforms the distribution of ranks of random typing into a geometric distribution.

\begin{figure}
\center
\includegraphics[scale = 0.7]{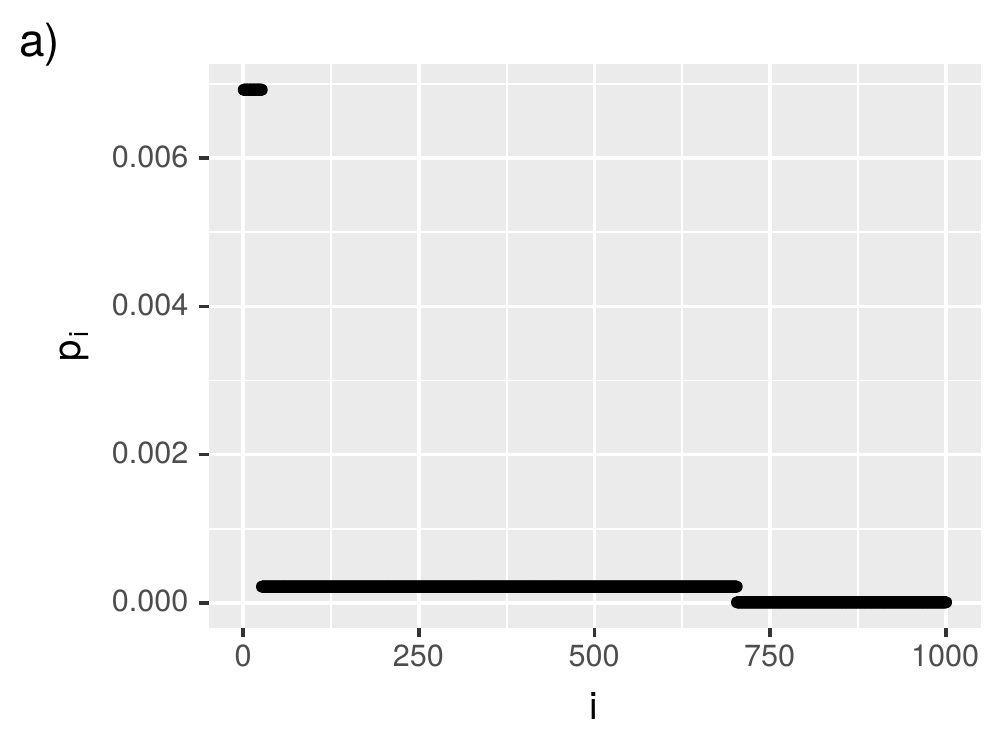}
\includegraphics[scale = 0.7]{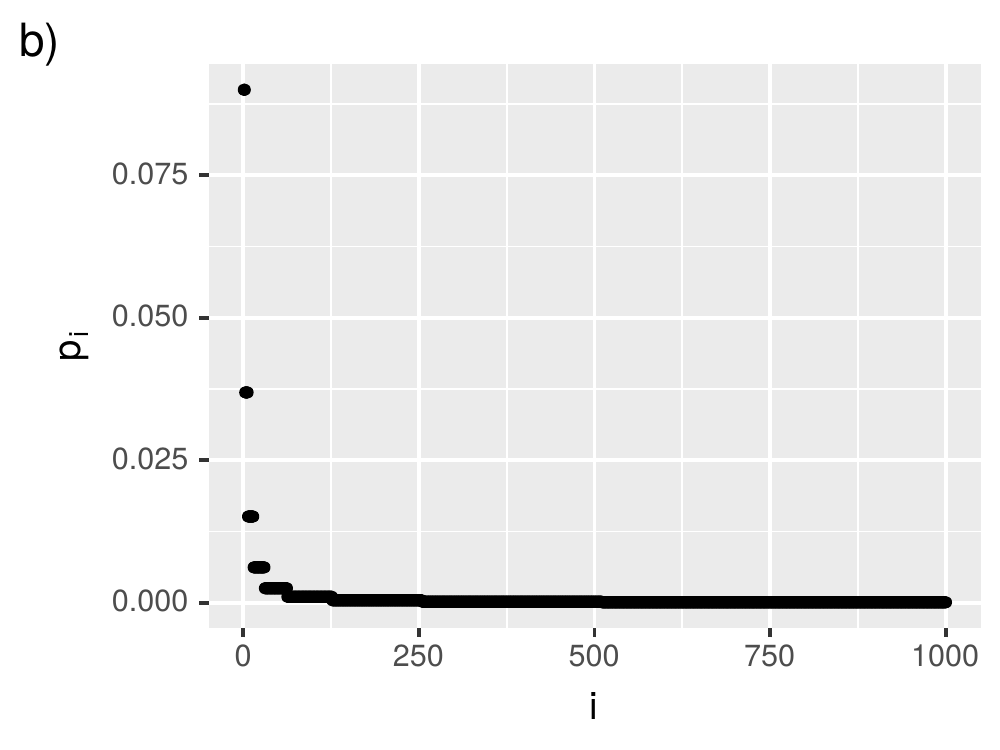}
\caption{\label{random_typing_figure} $p_i$ the probability of a word of rank $i$ produced by random typing with $p_s = 0.18$, $l_{min}=1$ up to rank $i_{max} = 1000$ (in  the standard random typing model, the maximum rank is infinite). a) $N=26$. b) $N=2$.}
\end{figure}
  
\section{Discussion}

\label{discussion_section}

In his pioneering research, Zipf found a tendency of more frequent words to be shorter. He termed this observation the law of abbreviation \citep{Zipf1949a}. However, he never proposed a functional dependency or mathematical model for the relationship between frequency and length. 

Here we have filled a gap in standard information theory concerning optimal non-singular coding, that predicts $l_i \approx \log i$, where $i$ is the probability rank. This result complements the well-known relationship $l_i \approx - \log p_i$ predicted by optimal uniquely decodable coding \citep{Cover2006a}.    
Derivations of a logarithmic relationship between the length of a word and its probability rank can be found in classic work \citep{Mandelbrot1966,Rapoport1982}. However, our derivation is novel in the sense of providing a general exact formula (not an approximation; covering $N \geq 1$ and $l_{min} \geq 0$) and involving optimal non-singular coding in the argument. It was clear to Mandelbrot that 
{\em ``given any prescribed set of word probabilities, the average number of letters per words is minimized if the list of words, ranked by decreasing probability, coincides with the list of the $V$ shortest letter sequences''} \citep[p. 365]{Mandelbrot1966} but he never provided an exact formula for the relationship between $l_i$ and $i$ as far as we know. Indeed, he actually thought it was impossible \citep{Mandelbrot1966}. Likewise, Rapoport 
did not take information theoretic optimality considerations into account
and simply stated that {\em ``we shall want the shortest words to be the most frequent''} \citep[p. 9]{Rapoport1982}.

Traditionally, quantitative linguistics research has been based on the fit of power-law-like models \citep{Sigurd2004a,Strauss2007a}. Surprisingly, the predictions of information theory reviewed above have largely been neglected. The problem concerns not only the relationship between length and frequency but also parallel quantitative linguistics research where frequency is replaced by the frequency rank (see \citet[p. 274]{Strauss2007a} and references therein). 
Some notable exceptions are discussed in the following. 

In the work by \citet{Hammerl1990a}, both the relationship $l_i \approx \log p_i$ and $l_i \approx \log i$ are considered. 
He explains that Guiraud (in 1959) derived $l_i \approx \log i$ by {\em ``purely combinatorial considerations, where all possible combinations of letters in the respective languages were allowed''} \citep{Hammerl1990a}.\footnote{The German original reads ``Guiraud (1959) hat aus rein kombinatorischen \"{U}berlegungen, wo alle m\"{o}glichen Buchstabenkombinationen aus den Buchstaben der jeweiligen Sprache bei der Bildung von W\"{o}rtern zugelassen wurden [...] folgende Abh\"{a}ngigkeit [...] abgeleitet.'' This is followed by the formulae given above in the main text.} Unfortunately, we have not been able to find a proper reference to Guiraud's work of 1959. Therefore, we cannot tell if Guiraud was following some optimization hypothesis akin to optimal singular-coding or if he actually provided an exact formula like ours. 
Finally, the logarithmic relationship between the frequency of a word and its length in phonemes has also been inferred based on empirical data collected for overall eight languages (see Equation 11 in \citet{Guiter1974}). However, this particular study is bare of any mathematical/information theoretic considerations.    

Besides 
historical considerations, our findings also have practical implications for empirical research on the law of abbreviation as an indication of optimal coding. First, it is usually assumed that a significant negative correlation between frequency and magnitude is needed for efficient coding \citep{Ferrer2009g,Semple2010a,Bezerra2011,Heesen2019a}. Our analyses indicate that a non-significant correlation can still be associated with efficient coding. For instance, we have seen that optimal coding with prescribed probabilities and magnitudes coming from some given multiset is equivalent to $\tau(p_i, l_i) \leq 0$ (Corollary \ref{Kendal_correlation_corollary}). The same conclusion can be reached from optimal uniquely decodable coding, where all strings must have the same length when types are equally likely (recall Equation \ref{optimal_coding_supplementary_equation}). Therefore, the influence of compression could be wider than commonly believed. 
What cannot be attributed to compression is the significant positive correlation between frequency and magnitude that has been found in a subset of the repertoire of chimpanzee gestures, i.e. full body gestures \citep{Heesen2019a}, in the vocalizations of female (but not male) hyraxes \citep{Demartsev2019a}, phrases of male gibbon solos \citep{Clink2020a}, computer experiments with neural networks \citep{Chaabouni2019a} and also in European heraldry \citep{Milton2019a}. Importantly, this illustrates that compression -- as reflected in the law of abbreviation -- is not necessarily found in all communication systems, which undermines arguments that quantitative linguistic laws are unavoidable and hence ``meaningless'' (see also \cite{Ferrer2012f} for the case of Menzerath's law).

Another argument along those lines is based on random typing: if random typing recreates Zipfian laws, then surely they are not an interesting subject of study \citep{Miller1957}. However, surprisingly, random typing turns out to be an optimal encoding system. Thus, finding linguistic laws in random typing does not preclude that these laws can be explained by information theoretic principles. However, while we have unveiled the optimality of random typing, we emphasize that we have done it only from the perspective of optimal non-singular coding. The fact that random typing and optimization are not independent issues as commonly believed \citep{Miller1957,Li1998,Kanwal2017a,Chaabouni2019a}, does not imply that random typing satisfies to a sufficient degree the optimization constraints imposed on natural languages.

We have seen that optimal non-singular coding predicts both a form of Zipf's law of abbreviation as well as a power-law distribution consistent with Zipf's law for word frequencies when combined with the maxent principle, revisiting an old argument by Mandelbrot \citep{Mandelbrot1966}. The capacity of maxent to obtain Zipfian laws as well as the less popular exponential distribution of parts-of-speech \citep{Tuzzi2010a} 
based on optimal and suboptimal coding considerations
suggests that the principle should be considered as a critical component of a compact theory of linguistic patterns in general. For instance, $p(d)$, the probability that two syntactically related words are at distance $d$ (in words), exhibits an exponential decay that has been derived with the help of a combination of maxent and a constraint on the average value of $d$ \citep{Ferrer2004b}.

The principle of maximum entropy used to derive Zipf's law for word frequencies ensures that {\em one is maximally uncertain about what one does not know} \citep{Kesavan2009a}. In the context of natural languages, a further justification of the use of the principle is that $I(S, R)$, the mutual information between words ($S$) and meanings ($R$) satisfies
\begin{equation}
I(S, R) \leq H(S),
\label{mutual_information_upper_bound_equation}
\end{equation}
where $H(S)$ is the entropy of words, namely the entropy of word probability ranks as defined above. The inequality in Equation \ref{mutual_information_upper_bound_equation} follows from elementary information theory \citep{Cover2006a}, and has been applied to investigate the properties of dual optimization models of natural communication \citep{Ferrer2007a}. $I(S, R)$ is a measure of the capacity of words to convey meaning: maximizing $I(S, R)$ one promotes that words behave like meaning identifiers \citep[Section 3]{Ferrer2015b}.  
Therefore, Equation \ref{mutual_information_upper_bound_equation} suggests that the maximum entropy principle in the context of word entropy maximizes the potential of words to express meaning. The hypothesis of pressure to maximize $H(S)$ is supported by the skew towards the right that is found in the distribution of $H(S)$ in languages across the world \citep{Bentz2017a}. 

The challenge of mathematical modelling is to find a compromise between parsimony and predictive power \citep{Burnham2002a}. Concerns about parsimony are a recurrent theme when modelling Zipf's law for word frequencies \citep{Mandelbrot1966,Visser2013a,Ferrer2015b}. As for maximum entropy models, it has been argued that Shannon entropy and a logarithmic constraint offer the simplest explanation for the origins of the law \citep{Visser2013a}. However, the argument is incomplete unless a justification for such a constraint is provided. Here we have shown how the logarithmic constraint follows from optimal non-singular coding. There are many possible explanations for the origins of Zipf's law based on maximum entropy \citep{Mandelbrot1966,Naranan1992b,Naranan1992c,Naranan1993,Ferrer2003c,Liu2008c,Baek2011a,Visser2013a}, and many more through other means \citep{Mitzenmacher2003a,Newman2004a}, but only compression can shed light on the origins of both Zipf's law for word frequencies and Zipf's law of abbreviation. The explanation of Zipf's law for word frequencies should not be separated from the explanation of other quantitative laws. Otherwise, the space of possible models is not sufficiently constrained \citep{Stumpf2012a}, and the resulting ``theory'' is not a well organized theory but a patchwork of models \citep{Ferrer2015b}. 

Our theoretical framework is highly predictive in at least two senses. First,  optimal coding predicts Zipf's law of abbreviation, but adherence to a traditional scheme (non-singular coding or uniquely decodable coding) is not necessary. It suffices to assume that the magnitudes come from some predefined multiset. Second, its applicability goes beyond laws from Zipf's classic work. It can also be applied to Menzerath's law, the tendency of constructs with more parts to be made of smaller parts, i.e. the tendency of words with more syllables to be made of shorter syllables \citep{Altmann1980a}. Taking the number of parts of constructs as probabilities of types ($p_i$'s) and the size of the parts as magnitudes ($l_i$'s) and simply assuming that the number of parts are constant, Menzerath's law follows applying theorem \ref{optimal_coding_theorem} \citep{Gustison2016a}. This allows one to put forward optimization as a possible hypothesis to explain the pervasiveness of the law in nature (e.g. \citet{Boroda1991a,Shahzad2015a,Gustison2016a}). 


\iftoggle{anonymous}{
}
{
\section*{Acknowledgements}

This article is dedicated to the memory of P. Grzybek (1957-2019), co-author of one of the works referenced \citep{Strauss2007a}.
We thank A. Hern\'andez-Fern\'andez for many corrections and valuable comments and to L. Debowski for helping us to strengthen some of the mathematical proofs in the early stages of this research project.
We also thank N. Ay and M. Gustison for helpful discussions. 
RFC is supported by the grant TIN2017-89244-R from MINECO (Ministerio de Economia, Industria y Competitividad) and the recognition 2017SGR-856
(MACDA) from AGAUR (Generalitat de Catalunya). CB is supported by the DFG Center for Advanced Studies \textit{Words, Bones, Genes, Tools} at the University of T\"{u}bingen, and by the Swiss National Foundation Grant on ``Non-randomness in Morphological Diversity: A Computational Approach Based on Multilingual Corpora'' (SNF 176305) at the University of Z\"{u}rich. 
CS is funded by a Melbourne Research Scholarship.
}

\bibliographystyle{apacite}


\bibliography{../law_of_abbreviation/biblio,Ramon}

\end{document}